\documentclass[letterpaper]{article} 
\usepackage[preprint]{aaai2027}  
\usepackage[hyphens]{url}  
\usepackage{graphicx} 
\usepackage{natbib}  
\usepackage{caption} 
\DeclareCaptionStyle{ruled}{labelfont=normalfont,labelsep=colon,strut=off}
\usepackage{booktabs}
\usepackage{amsmath,amssymb,amsthm}
\usepackage{enumitem}

\newcommand{\MutagenicityN}{4337}
\newcommand{\MutagenicityAvgNodes}{30.32}
\newcommand{\MutagenicityPctSym}{93.4}

\newcommand{\MutagN}{188}
\newcommand{\MutagPctSymModel}{100.0}

\newcommand{\GroupCheckN}{188}

\newcommand{\GroupCheckMaxDiff}{0}
\newcommand{\ProteinsPctSym}{88.4}
\newcommand{\ProteinsN}{250}
\newcommand{\BbbpPctSym}{70.0}
\newcommand{\BbbpN}{250}
\newcommand{\TreeCyclesPct}{38.7}
\newcommand{\TreeGridPct}{44.0}
\newcommand{\SynthJudgments}{1624}

\newcommand{\RealJudgments}{9312}

\newcommand{\RealPredAccAbs}{99.83}
\newcommand{\MutagenicityJudgments}{3594}
\newcommand{\TotalJudgments}{14530}
\newcommand{\ExcN}{4}
\newcommand{\ExcTotal}{884}

\newcommand{\NitroPairMolecules}{25}
\newcommand{\IgWeightRatio}{1.00}
\newcommand{\IgKTenOneOnlyPct}{24.0}
\newcommand{\IgKTenOneOnlyCount}{6}
\newcommand{\IgKTenOneOnlyCiLo}{11.5}
\newcommand{\IgKTenOneOnlyCiHi}{43.4}
\newcommand{\IgKTenArbitraryPct}{100.0}
\newcommand{\SalWeightRatio}{1.00}
\newcommand{\SalKTenOneOnlyPct}{12.0}
\newcommand{\GnnWeightRatioMean}{1.67}
\newcommand{\GnnWeightRatioMax}{2.93}
\newcommand{\GnnKTenOneOnlyPct}{24.0}
\newcommand{\StabDet}{1.0000}
\newcommand{\StabSample}{0.8005}
\newcommand{\StabUnion}{1.0000}
\newcommand{\StabQualIdentical}{93.1}

\newcommand{\GnnOrbitSwitchPct}{0.4}
\newcommand{\CaseN}{25}
\newcommand{\CaseEquivalent}{25}

\newcommand{\CaseTopGraph}{80}

\newcommand{\CaseTopWA}{3.515}
\newcommand{\CaseTopWB}{1.198}
\newcommand{\FixAutMs}{0.11}
\newcommand{\FixAutMsMax}{0.89}
\newcommand{\FixSilentBefore}{83.8}
\newcommand{\FixExtraEdges}{0.43}
\newcommand{\FixFidBefore}{5.1312}
\newcommand{\FixFidAfter}{5.1425}
\newcommand{\UnsafeKMin}{19.5}
\newcommand{\UnsafeKMax}{39.8}
\newcommand{\ReproSymMin}{0.0}
\newcommand{\ReproSymMax}{10.6}
\newcommand{\ReproCtrlMax}{0.0}
\newcommand{\ArchJudgments}{6768}
\newcommand{\ArchPredAcc}{100.00}
\newcommand{\GrandTotalJudgments}{21298}
\newcommand{\AccGCN}{0.867}
\newcommand{\AccGIN}{0.851}
\newcommand{\AccGAT}{0.840}
\newcommand{\EquivAsymOrder}{2e-9}
\newcommand{\OptAsymMin}{0.14}
\newcommand{\OptAsymMax}{0.17}
\newcommand{\ArchEquivSilentMin}{4.8}
\newcommand{\ArchEquivSilentMax}{36.7}
\newcommand{\AccGine}{0.968}

\newcommand{\ManuPctMolecules}{100.0}
\newcommand{\ManuExcessMean}{3.32}
\newcommand{\ManuExcessMax}{16}
\newcommand{\IntrPairsN}{371}
\newcommand{\IntrGcnIndistPct}{100.0}
\newcommand{\IntrGineSepPct}{0.0}
\newcommand{\IntrBlindSepPct}{0.0}
\newcommand{\ManuPairsN}{360}
\newcommand{\ManuGcnIndistPct}{100.0}
\newcommand{\ManuGineSepPct}{99.7}
\newcommand{\ManuBlindSepPct}{0.0}

\newcommand{\ManuEdgePct}{16.2}
\newcommand{\ManuEdgeMax}{90.9}
\newcommand{\AccGineMg}{0.831}
\newcommand{\ManuGineSepPctMg}{13.9}
\newcommand{\ManuGcnIndistPctMg}{100.0}

\newcommand{\ManuExcessMeanMg}{1.74}

\newcommand{\WlCoarserPct}{45.2}
\newcommand{\WlOrbitPairs}{731}
\newcommand{\WlExtraPairs}{325}

\newcommand{\WlNaiveIndistPct}{42.5}
\newcommand{\WlNaiveMax}{$1.1\times 10^{-2}$}
\newcommand{\WlAttrOrbitSalPct}{100.0}
\newcommand{\WlAttrOrbitIgPct}{100.0}
\newcommand{\WlAttrOnlySalPct}{57.9}
\newcommand{\WlAttrOnlyIgPct}{20.0}

\newcommand{\ForcedSevered}{165}
\newcommand{\ForcedAvoidable}{0}
\newcommand{\CrossTieGraphs}{25}

\newtheorem{theorem}{Theorem}
\newtheorem{proposition}{Proposition}
\newtheorem{corollary}{Corollary}
\theoremstyle{definition}

\theoremstyle{remark}
\newtheorem{remark}{Remark}

\usepackage{algorithm}
\usepackage{algorithmic}
\usepackage{tikz}
\usetikzlibrary{arrows.meta,positioning}

\usepackage{fancyvrb}
\usepackage{float}   
\usepackage{cleveref}
\crefname{proposition}{Proposition}{Propositions}
\Crefname{proposition}{Proposition}{Propositions}
\crefname{theorem}{Theorem}{Theorems}
\Crefname{theorem}{Theorem}{Theorems}
\crefname{corollary}{Corollary}{Corollaries}
\Crefname{corollary}{Corollary}{Corollaries}
\crefname{remark}{Remark}{Remarks}
\Crefname{remark}{Remark}{Remarks}
\crefname{section}{Section}{Sections}
\Crefname{section}{Section}{Sections}
\crefname{table}{Table}{Tables}
\Crefname{table}{Table}{Tables}

\newcommand{\Aut}{\mathrm{Aut}}
\newcommand{\Stab}{\mathrm{Stab}}
\newcommand{\Valid}{\mathrm{Valid}}
\newcommand{\WL}{\mathrm{WL}}

\title{Automorphism-Induced Non-Canonicity in\\
Top-$k$ Explanations of Graph Neural Networks}
\author{
    Xin Xu,
    Siru Tao,
    Kaizhen Tan
}
\affiliations{
    Carnegie Mellon University\\
    xuxin@cmu.edu, sirutao@andrew.cmu.edu, kaizhent@cmu.edu
}

\begin{document}
\maketitle

\begin{abstract}
A gradient-based GNN explainer given a molecule with two chemically equivalent
nitro groups assigns them attribution scores that are equal to the last bit. It
cannot do otherwise: message passing is exactly permutation equivariant, so any
automorphism of the input leaves every attribution invariant. Yet the standard
report, the top-$k$ edges, names one of the two, and which one is settled by the
order of an array.

We show this is a structural obstruction rather than an implementation slip.
When no minimal valid explanation is fixed by the input's automorphism group, no
rule can be single-valued, minimal and symmetry-respecting at once. For the
exact-$k$ reports used in practice we give a parameter-free criterion, mechanised
in Lean~4 with no axiom dependencies, that decides from the graph alone whether
\emph{every} score-optimal report of that size must split an orbit. Across
\GrandTotalJudgments{} instance--budget decisions the criterion agrees with a
mechanical model-equivalence check without exception, and no severing case we
found admitted a neutral alternative.

The obstruction is common. Nontrivial automorphisms occur in
\MutagenicityPctSym{}\% of Mutagenicity, the dataset the seminal explainability
papers use, so the measure-zero dismissal of symmetric inputs, sound on the
continuous domains it was made for, collapses here. At the sparsity budget those papers report,
\IgKTenOneOnlyPct{}\% of molecules with two interchangeable nitro groups
(\IgKTenOneOnlyCount{} of \NitroPairMolecules{}) surface exactly one of them,
every one arbitrary under mechanical verification. A model's blindness also manufactures symmetry: every MUTAG
molecule contains atoms chemistry separates and the network provably cannot, and
a matched control shows the resolution is set by what the model reads rather than
how it is parameterised. Reporting orbits removes the arbitrariness at
\FixAutMs{}\,ms and \FixExtraEdges{} extra edges per graph.
\end{abstract}

\section{Introduction}

Consider a molecule with two nitro groups in symmetric positions, and a graph
neural network that predicts mutagenicity. The two groups are interchangeable:
there is an automorphism of the molecular graph carrying one onto the other.
Because message passing is permutation equivariant, the network's output is
invariant under that automorphism, and so is any attribution score computed
from its outputs. A gradient-based explainer therefore scores the two groups
\emph{identically}, and we measure this identity to be exact rather than
approximate: on the \NitroPairMolecules{} molecules of this kind in MUTAG,
Integrated Gradients assigns the two groups a total attribution ratio of
\IgWeightRatio{}$\times$.

The explanation the chemist sees is not the attribution vector. It is a
subgraph, conventionally the top-$k$ edges. And a top-$k$ selection over
values that are equal cannot be neutral: it must order the tied edges
somehow, and common implementations inherit the order of an edge array. At $k=10$, the
budget both \citet{ying2019gnnexplainer} and \citet{luo2020pgexplainer} report
for molecular data, \IgKTenOneOnlyPct{}\% of these molecules have exactly one
of the two equivalent groups surfaced in the reported explanation. In
\IgKTenArbitraryPct{}\% of those cases we verify mechanically that the
alternative---the same explanation carried across by the automorphism---is
accepted by the model to within floating-point tolerance. \Cref{fig:teaser} shows the three stages on
1,4-dinitrobenzene, a molecule of exactly this kind. The chemist is shown one of
two answers the model cannot tell apart, and is not told there was a choice.

\paragraph{What is and is not new here.}
The group theory is not. That an equivariant map cannot break a symmetry of its
input is Curie's principle, stated for machine learning by
\citet{smidt2021finding} and in stabiliser form by
\citet{kaba2023symmetry}. The same monotonicity underlies the impossibility of
continuous canonicalisation \citep{dym2024equivariant} and the inability of
structural representations to separate automorphic nodes
\citep{srinivasan2020equivalence}. Symmetry preservation is an axiom in
\citet{sundararajan2017axiomatic}, whose Theorem~1 makes Integrated Gradients
the unique path method satisfying it, and equivariance of saliency and
Integrated Gradients on graphs is proved by \citet{crabbe2023evaluating}. Both
halves of our starting observation are in print. What is missing is their
composition and its cost. This literature treats equivariance as a property to
achieve and sets symmetric inputs aside as negligible:
\citet{kaba2023symmetry} observe that inputs with a nontrivial stabiliser form a
measure-zero set, true for the continuous domains they consider. On graphs that
argument fails by a wide margin.

\paragraph{Contributions.}
\begin{itemize}[leftmargin=1.2em,itemsep=2pt,topsep=2pt]
\item \textbf{The obstruction.} Single-valuedness, minimality and
symmetry-respect are jointly unachievable exactly when no minimal explanation is
fixed by the input's automorphism group. For the exact-$k$ reports used in
practice we give a parameter-free criterion, mechanised in Lean~4 with no axiom
dependencies, deciding whether every score-optimal report of that size must sever
an orbit.

\item \textbf{The hierarchy, and a prediction it makes.} What the domain
separates, what the model's input separates and what its algorithm separates are
nested. The first gap puts \ManuEdgePct\% of MUTAG's edges beyond the model's
resolution, and a bond-reading network with a matched control shows that reading
an annotation is necessary for an explanation to resolve at the domain's
resolution but not sufficient. The second gap stops at the representation:
refinement classes tie what the forward pass cannot separate, yet attributions
separate most of those pairs, which bounds what an expressiveness result can say
about an explanation.

\item \textbf{The measurement.} Nontrivial automorphisms occur in
\MutagenicityPctSym\% of Mutagenicity ($n{=}\MutagenicityN$), the dataset the
seminal explainability papers actually use, so the measure-zero dismissal does
not transfer from continuous domains to graphs. The criterion agrees with a
mechanical model-equivalence check on all \GrandTotalJudgments{} decisions we
tested. Equivariant explainers split attribution between two interchangeable
functional groups in a ratio of $\IgWeightRatio\times$ and still surface only one
of them \IgKTenOneOnlyPct\% of the time.

\item \textbf{The consequences.} A stability of $1$ is not evidence of
canonicity, since resolving ties by array index attains it while orbit sampling
scores \StabSample{} on outputs of identical fidelity. Reporting orbits removes
the arbitrariness at \FixAutMs{}\,ms and \FixExtraEdges{} extra edges. Separately,
the standard synthetic benchmarks understate the phenomenon because their
generator attaches each motif through a vertex its own automorphism moves.
\end{itemize}

\begin{figure}[t]
\centering
\includegraphics{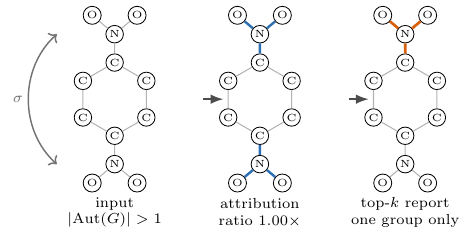}
\caption{1,4-dinitrobenzene, a MUTAG molecule whose two nitro groups are
exchanged by an automorphism $\sigma$. Because message passing is exactly
equivariant, an equivariant explainer must score the two groups equally, and
does: we measure a ratio of $\IgWeightRatio\times$ (centre). A top-$k$ report
must nonetheless name one of them (right), and which one is settled by array
index. At the budget the seminal papers use this happens for
\IgKTenOneOnlyPct\% of such molecules, and in \IgKTenArbitraryPct\% of those the
alternative is accepted by the model to within tolerance.}
\label{fig:teaser}
\end{figure}

\paragraph{Roadmap.}
\Cref{prop:equiv} records that equivariance makes the relevant equalities exact.
\Cref{prop:hierarchy} places that fact in the hierarchy, \Cref{thm:trilemma}
turns it into the impossibility, and \Cref{prop:criterion} decides for a given
instance and budget whether the impossibility is triggered. The results then
measure the incidence, the two gaps, the stage of the pipeline at which the
disparity appears, and the behaviour of the stability metric, before a final
section gives the remedy and its cost. Every number in this paper is produced by
a script from the logged runs, and none is transcribed by hand.

\section{Related Work}
\label{sec:related}

\paragraph{Symmetry of explanations.}
\citet{crabbe2023evaluating} formalise invariance and equivariance of
interpretability methods, prove saliency and Integrated Gradients equivariant,
and estimate violation by Monte Carlo over the full symmetric group. We compute
$\Aut(G)$ exactly with \texttt{nauty} and read the argument in the opposite
direction: equivariance is not a goal to reach but a constraint that forces a
tie. \citet{luong2024rage} observe that PGExplainer's index-order concatenation
makes it not permutation invariant, a different defect from the degeneracy
studied here, and \citet{ling2025equigx} build an explainer for equivariant
networks without establishing equivariance of the explanation itself. Axiomatic
treatments of graph attribution \citep{duval2021graphsvx,bui2024myerson} carry a
symmetry axiom that permutes the value function or asserts equal treatment of
equals, but none requires $\Phi(\pi\cdot G)=\pi\cdot\Phi(G)$. Outside
explainability the same structure is exploited rather than diagnosed:
\citet{bleukx2025mus} show that the image of a minimal unsatisfiable subset under
a symmetry of the instance is again one, and prune enumeration with it. Their
symmetry is our obstruction, put to work. Note that ``invariant explanation'' in
this literature usually means domain invariance for out-of-distribution
generalisation; we use the permutation sense throughout.

\paragraph{Multiplicity, observed but attributed elsewhere.}
\citet{kosan2024gnnxbench} report cross-run Jaccard near $0.5$ and note that
explanations change while their quality does not, which they attribute to
non-convexity. \citet{agarwal2023evaluating} take the maximum over a
hand-annotated set of equally valid ground truths, motivated by semantic
redundancy rather than symmetry, where our criterion generates that set from
$\Aut(G)$ instead of requiring curation. \citet{faber2021when} identify redundant
evidence as a pitfall using an example that is an automorphism.
\citet{azzolin2025reconsidering} record that a top-$k$ operator behaves unstably
given equal scores and randomly permute nodes at load time, which under our
account randomises which orbit representative the user sees.

\section{Theory}
\label{sec:theory}

\subsection{Preliminaries}
\label{sec:setting}

Let $G=(V,E)$ be a finite undirected graph with a node colouring $c$ (the node
features) and let $f$ be a message-passing GNN with frozen weights. We write
$\Aut(G)$ for the group of adjacency- and colour-preserving permutations. An
edge mask $m\in\mathbb{R}^{|E|}$ is fed to $f$ as edge weights, written
$f(G,m)$. An automorphism $\sigma\in\Aut(G)$ acts on masks by permuting
coordinates along the induced edge permutation $\hat\sigma$.

\begin{proposition}[Exact equivariance]
\label{prop:equiv}
For every $\sigma\in\Aut(G)$ and every mask $m$, $f(G,\sigma\cdot m)=f(G,m)$
for graph-level readout, and $f(G,\sigma\cdot m)_{\sigma(v)}=f(G,m)_v$ for
node-level readout.
\end{proposition}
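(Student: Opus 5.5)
The plan is to reduce the statement to the standard permutation-equivariance of message passing, applied to the permutation $\sigma$ itself, and then use the fact that $\sigma$ fixes the input. First I would make the action precise. For a permutation $\pi$ of $V$, let $\pi\cdot(G,c,m)$ denote the relabelled graph with edge set $\{\{\pi(u),\pi(v)\}:\{u,v\}\in E\}$, colouring $c\circ\pi^{-1}$, and mask $(\pi\cdot m)_{\hat\pi(e)}=m_e$. Every message-passing layer computes $h^{(\ell+1)}_v=\phi\bigl(h^{(\ell)}_v,\ \bigoplus_{u\in N(v)}\psi(h^{(\ell)}_v,h^{(\ell)}_u,m_{uv})\bigr)$, where $\bigoplus$ is a multiset aggregator (sum, mean or max) and $\phi,\psi$ are fixed functions with frozen weights. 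The edge weight enters only through the message on that edge; for GCN-style degree normalisation it enters through weighted degrees, which are themselves sums over neighbours.

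The main step is an induction on the layer index $\ell$. The hypothesis is that the embeddings computed on $\pi\cdot(G,c,m)$ satisfy $h'^{(\ell)}_{\pi(v)}=h^{(\ell)}_v$ for every $v$. The base case holds because the initial features are the colours, and $c'(\pi(v))=c(v)$. For the inductive step, $\pi$ is a bijection from $N(v)$ onto $N'(\pi(v))$, and it carries each edge weight $m_{uv}$ to the weight on the image edge. The multiset of messages arriving at $\pi(v)$ is therefore identical to the multiset arriving at $v$, and a multiset aggregator cannot depend on the order in which the neighbours are enumerated. Node-level equivariance follows at every depth. For graph-level readout, $\sum_v h_v$ (or mean/max) is a multiset function over all nodes, so it is invariant.

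Now specialise to $\pi=\sigma\in\Aut(G)$. By definition of $\Aut(G)$, $\sigma$ preserves the edge set and the colouring, so $\sigma\cdot(G,c,m)=(G,c,\sigma\cdot m)$. The relabelled input is the same graph carrying the permuted mask. Substituting into the induction gives $f(G,\sigma\cdot m)_{\sigma(v)}=f(G,m)_v$, and invariance of the readout gives $f(G,\sigma\cdot m)=f(G,m)$.

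I expect the main obstacle to be bookkeeping rather than mathematics. One point is the direction convention of $\hat\sigma$, that is, whether $\sigma\cdot m$ uses $\hat\sigma$ or $\hat\sigma^{-1}$. The other is confirming that every architecture in scope (GCN normalisation, GIN's $(1+\epsilon)$ self term, GAT attention softmax over neighbours) is a multiset function of neighbour data together with the node's own state. For attention, I would check that the softmax denominator is a sum over the same neighbour multiset, so it is preserved. I would also remark that "exact" here means exact in real arithmetic. In floating point, summation order can differ, but since $\sigma$ maps $G$ to itself and the implementation traverses the same edge array, the ties observed in practice are bitwise.
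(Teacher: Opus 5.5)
Your proof is correct and follows the paper's argument: induction on layers using that $\sigma$ preserves each neighbour multiset, then invariant pooling or equivariant node readout. You route it through relabelling-equivariance for an arbitrary permutation before specialising to $\sigma$, and you track the mask and the GCN degree normalisation explicitly, which the paper's one-line proof leaves implicit.

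One side remark is wrong, though. Traversing the same edge array does not make the ties bitwise: the neighbour order summed at $\sigma(v)$ need not be the $\sigma$-image of the order summed at $v$, and sum pooling adds the same values in a permuted order. That accumulation-order effect is exactly the one-unit-in-the-last-place residue the paper itself reports.
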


\begin{proof}
By induction on layers. Each layer aggregates over the multiset of neighbour
states, which $\sigma$ preserves because it preserves adjacency and colours.
Graph-level readout is a permutation-invariant pooling, and node-level readout
follows $\sigma$.
\end{proof}

The consequence we use throughout is that equality here is exact, not
approximate. Any validity criterion built from $f$'s outputs---fidelity,
$\varepsilon$-thresholded fidelity, mutual information, a Shapley
value---is therefore \emph{exactly} $\sigma$-invariant, since thresholding two
identical numbers gives identical results. This is stronger than the
corresponding statement for tabular collinearity, where equally good
alternatives must be assumed via a Rashomon set. Here a single frozen model
returns bit-identical outputs, with no assumption beyond the architecture. In
practice on float32 we observe agreement to within one unit in the last place.
Of \ExcTotal{} decisions in our largest single sweep, \ExcN{} differ, all by
exactly $2^{-16}$.

\subsection{A Resolution Hierarchy}
\label{sec:hierarchy}

Which group is the right one is not a bookkeeping choice. Three equivalence
relations on the nodes of $G$ nest, and the gaps between them are the subject of
this paper. Write $u\sim_{\mathrm{dom}}v$ when some automorphism preserving
\emph{all} annotations of the domain carries $u$ to $v$, write $u\sim_f v$ when
one preserving only what $f$ reads does, and write $u\sim_{\WL}v$ when $u$ and
$v$ receive the same colour after $L$ rounds of colour refinement, initialised
with every input a layer consults beyond the multiset of neighbour states.

\begin{proposition}[Resolution hierarchy]
\label{prop:hierarchy}
For an $L$-layer message-passing network $f$,
\[
\sim_{\mathrm{dom}}\ \subseteq\ \sim_f\ \subseteq\ \sim_{\WL},
\]
and $f$ assigns identical representations to any pair related by
$\sim_{\WL}$.
\end{proposition}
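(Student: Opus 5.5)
The proof splits into the two inclusions and then the claim that $f$ respects $\sim_{\WL}$. The second half of that claim is needed for the second inclusion, so I would prove it first.

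\textbf{First inclusion.} This is immediate from the definitions. An automorphism preserving all annotations of the domain in particular preserves the subset of annotations that $f$ reads. So the group defining $\sim_{\mathrm{dom}}$ is a subgroup of the group defining $\sim_f$, and orbits of a subgroup refine orbits of the ambient group.

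\textbf{$f$ respects $\sim_{\WL}$.} I would show by induction on $\ell\le L$ that equal refinement colour after $\ell$ rounds forces equal hidden state after layer $\ell$. The base case holds because the refinement is initialised with every input a layer consults beyond the multiset of neighbour states. Two nodes with equal round-$0$ colour therefore have identical layer-$0$ inputs, and so identical initial embeddings.

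For the inductive step, suppose $u$ and $v$ share a round-$(\ell+1)$ colour. By the definition of refinement they share the round-$\ell$ colour, and the multisets of round-$\ell$ colours of their neighbours agree. When edge weights or bond features are among the consulted inputs, the multisets are of (edge input, colour) pairs; this is exactly why the initialisation clause is phrased as it is. By the induction hypothesis the multisets of neighbour states coincide. Layer $\ell+1$ is a function of the node's own state, its consulted extras, and this multiset, so its outputs agree.

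\textbf{Second inclusion.} For $\sim_f\subseteq\sim_{\WL}$, take $\sigma$ preserving adjacency and everything $f$ reads, with $\sigma(u)=v$. I would prove by induction on rounds that refinement colours are $\sigma$-invariant, that is, $\mathrm{col}_\ell(\sigma(w))=\mathrm{col}_\ell(w)$ for all $w$. The initial colours are invariant because $\sigma$ preserves the read inputs. At each later round, $\sigma$ bijects the neighbourhood of $w$ onto that of $\sigma(w)$ and preserves edge inputs, so the neighbour multisets are equal. This is the same argument as \Cref{prop:equiv}, applied to the refinement instead of the network.

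\textbf{Main obstacle.} The mathematics is not the hard part; the difficulty is making "what $f$ reads" and the refinement initialisation match precisely. Edge features, edge masks, self-loop and degree normalisations (as in GCN), and attention coefficients (as in GAT) must all be either functions of the multiset of (edge input, neighbour state) pairs or included in the initial colour. For GCN the degree normalisation depends on the neighbour's degree, which is recoverable from the colour after one round. To avoid an off-by-one in $L$, I would put degree into the initial colouring explicitly.

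I would state a mild architectural assumption covering these cases: each layer is a function of the node's state, its consulted extras, and the multiset of (edge input, neighbour state) pairs. Then I would check GCN, GIN and GAT against it.
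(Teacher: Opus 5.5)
Your proof is correct and follows the paper's route. The first inclusion comes from restricting to a subgroup, and the second from an induction on rounds showing that an automorphism of what $f$ reads preserves refinement colours. For the final claim you write out the layer-by-layer induction of Xu et al.\ and Morris et al., which the paper only cites, and you handle the GCN degree normalisation by putting degree into the initial colouring, as the paper later does. One small misstatement: the final claim is not needed for the second inclusion, and your own argument for that inclusion never uses it, so the remark about proving it first can be dropped.
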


\begin{proof}
An annotation-preserving permutation preserves in particular what $f$ reads,
giving the first containment. For the second, an automorphism preserves the
initial colouring and, by induction on rounds, the refined one. The final claim
is the bound of \citet{xu2019powerful} and \citet{morris2019weisfeiler} read at
the node level: a layer is a function of the node's own state and the multiset
of its neighbours', which is what one refinement round records.
\end{proof}

The proposition stops at representations, and deliberately so. For a pair related
by an automorphism, \Cref{prop:equiv} carries equality through to any attribution
because a single permutation acts on the whole computation and on the
perturbation coordinates at once. Two nodes that share a colour without being
automorphic admit no such permutation, and a coordinate-level attribution can
separate them even where the forward pass does not. We measure this below rather
than assume it either way.

\begin{figure}[t]
\centering
\includegraphics{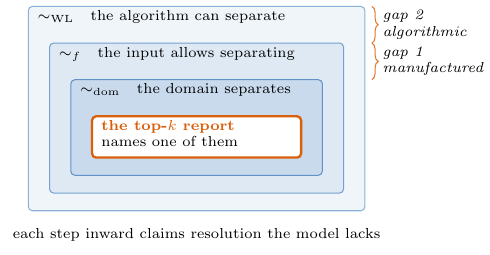}
\caption{The three relations of \Cref{prop:hierarchy}, coarsest outside, with a
top-$k$ report sitting strictly inside all of them. Gap~1 is resolution the model
provably lacks: it is present in \ManuPctMolecules{}\% of MUTAG molecules and
places \ManuEdgePct{}\% of their edges in an orbit the domain splits. Gap~2 is
resolution the forward pass lacks but a coordinate-level attribution can still
supply, which is why only the first gap forces a tie.}
\label{fig:hierarchy}
\end{figure}

Both containments are generally strict, and each strictness has a distinct
meaning (\Cref{fig:hierarchy}). A pair tied under $\sim_f$ but separated under $\sim_{\mathrm{dom}}$ we
call a \emph{manufactured} tie: the model cannot tell the two apart, though the
domain can. Our GCN reads node features and adjacency but not bond types, so a
permutation exchanging the two oxygens of a nitro group---distinguishable in
Kekul\'e form---leaves its output untouched, verified mechanically on all
\GroupCheckN{} MUTAG molecules, where the output moves by
\GroupCheckMaxDiff{} in every case. A pair tied under $\sim_{\WL}$ but not under $\sim_f$ is an \emph{algorithmic}
tie: no automorphism relates the two, yet no network of this depth and class
separates their representations.

The consequence is the thesis of this paper. A top-$k$ report distinguishes
edges, and within a $\sim_f$ class the model does not: there \Cref{prop:equiv}
carries the equality all the way to the attribution, so a report that names one
edge displays a distinction its own subject cannot make. How far the same holds
across the wider $\sim_{\WL}$ classes is an empirical question we answer below.

Let $\Valid(\cdot)$ be any such criterion and
$\mathcal{S}=\{E'\subseteq E:\Valid(E')\}$. By \Cref{prop:equiv},
$\hat\sigma(\mathcal{S})=\mathcal{S}$, so the minimal elements of
$\mathcal{S}$ form a $\hat\sigma$-invariant antichain.

\subsection{The Trilemma}
\label{sec:trilemma}

Call a selection rule $\Phi$, mapping instances to explanation subgraphs,
\emph{neutral} if $\Phi(\sigma\cdot G)=\hat\sigma(\Phi(G))$ for all $\sigma$.

\begin{proposition}[No neutral selection]
\label{prop:neutral}
A neutral $\Phi$ satisfies $\hat\sigma(\Phi(G))=\Phi(G)$ for every
$\sigma\in\Aut(G)$ (for node-level tasks, every $\sigma\in\Stab(v)$). Hence a
neutral rule exists at $G$ if and only if the minimal elements of
$\mathcal{S}$ contain a point fixed by the whole group.
\end{proposition}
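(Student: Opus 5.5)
The first claim follows by specialising neutrality to automorphisms; the second is a two-way check that only becomes meaningful once ``a neutral rule exists'' is read together with single-valuedness and minimality. The key observation is that for $\sigma\in\Aut(G)$ we have $\sigma\cdot G=G$, meaning the permuted instance has the same adjacency, the same colouring and, by \Cref{prop:equiv}, the same outputs of $f$. So I would first fix $\sigma\in\Aut(G)$ and apply the neutrality identity to it. The left side becomes $\Phi(\sigma\cdot G)=\Phi(G)$, since $\Phi$ is a function of the instance and the instance has not changed. Equating with the right side gives $\hat\sigma(\Phi(G))=\Phi(G)$.

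For node-level tasks the instance is the pair $(G,v)$, and $\sigma\cdot(G,v)=(G,\sigma(v))$. This equals $(G,v)$ exactly when $\sigma\in\Stab(v)$, so the same argument goes through with $\Aut(G)$ replaced by $\Stab(v)$. Throughout, $\Aut(G)$ is understood as this relevant group.

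For the equivalence, ``a neutral rule exists at $G$'' is read, as in the trilemma, as a single-valued neutral rule whose output is a minimal element of $\mathcal{S}$.

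\emph{Necessity.} If such a $\Phi$ exists, then $\Phi(G)$ lies among the minimal elements of $\mathcal{S}$. By the first part, $\Phi(G)$ is fixed by every $\hat\sigma$, so it is a group-fixed minimal element.

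\emph{Sufficiency.} Suppose some minimal $M$ has $\hat\sigma(M)=M$ for all $\sigma$. Define $\Phi$ on the orbit of $G$ under the full symmetric group by $\Phi(\pi\cdot G)=\hat\pi(M)$.

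\begin{itemize}
\item \emph{Neutrality.} This holds by construction, since $\Phi(\sigma\pi\cdot G)=\widehat{\sigma\pi}(M)=\hat\sigma\hat\pi(M)$.
\item \emph{Well-definedness.} If $\pi\cdot G=\pi'\cdot G$, then $\pi^{-1}\pi'\in\Aut(G)$, so $\hat\pi'(M)=\hat\pi(M)$ by the fixedness of $M$.
\item \emph{Minimality.} Validity is transported along isomorphisms: $f$ is equivariant under arbitrary relabellings, not only automorphisms, by the same induction as in \Cref{prop:equiv}. Hence $\hat\pi(M)$ is a minimal valid explanation of $\pi\cdot G$.
\item \emph{Other instances.} On graphs outside this isomorphism class, $\Phi$ can be chosen independently on one representative per class and extended in the same way.
\end{itemize}

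The main obstacle is the well-definedness of this extension. It is exactly the point where the fixed-point hypothesis is needed, and it requires applying the equivariance of \Cref{prop:equiv} to isomorphisms between distinct labelled graphs, not only to automorphisms of a single one. I would state this generalisation explicitly; it carries over verbatim because the layer-wise induction never used $\pi\cdot G=G$. A second point to make explicit is that ``minimal elements'' carries the invariance $\hat\sigma(\mathcal{S})=\mathcal{S}$ from the end of \Cref{sec:hierarchy}. Without that invariance the fixed-point condition would not even be stable across the orbit.
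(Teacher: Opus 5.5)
Your proof is correct and follows the paper's route. The first claim is neutrality specialised to those $\sigma$ with $\sigma\cdot G=G$, and you read the equivalence, rightly, with minimality built in as in \Cref{thm:trilemma}: necessity via that first claim, and sufficiency by letting the rule output an $H$-fixed minimal element, which the paper compresses into the line ``any $E^\ast\in\operatorname{Min}(\mathcal{S})^{H}$ is itself such a rule's output'' and otherwise defers to \citet[Prop.~2.1]{kaba2023symmetry}. Your extension $\Phi(\pi\cdot G)=\hat\pi(M)$ with its well-definedness check is exactly the detail that line leaves implicit; in the ``other instances'' bullet, note only that the value at a representative need be fixed by that graph's automorphism group (the full edge set always is), not minimal, since existence is claimed only at $G$.
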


The argument is the routine one for breaking a tie between alternatives a rule
cannot distinguish, and we claim no novelty for it. It is the graph instance of
\citet[Prop.~2.1]{kaba2023symmetry}. Its content here is what it forbids.

\begin{theorem}[Conditional trilemma]
\label{thm:trilemma}
Write $H=\Aut(G)$ and let $\operatorname{Min}(\mathcal{S})^{H}$ be the set of
minimal elements of $\mathcal{S}$ fixed by all of $H$. A rule that is
\emph{(i)} single-valued, \emph{(ii)} minimal and \emph{(iii)} neutral exists at
$G$ if and only if $\operatorname{Min}(\mathcal{S})^{H}\neq\varnothing$.
Hence when no minimal valid explanation is $H$-fixed, the three cannot hold
together.
\end{theorem}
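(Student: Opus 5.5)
The plan is to prove the two directions separately. The whole argument will rest on \Cref{prop:neutral} and on the invariance $\hat\sigma(\mathcal{S})=\mathcal{S}$ derived from \Cref{prop:equiv}. First I make the three properties precise at a fixed instance $G$. (i) Single-valued means $\Phi(G)$ is one subgraph $E'\subseteq E$, not a set of alternatives. (ii) Minimal means $\Phi(G)\in\operatorname{Min}(\mathcal{S})$. (iii) Neutral is the condition defined above.

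\emph{Necessity.} Suppose $\Phi$ satisfies (i)--(iii) at $G$. Take any $\sigma\in H$. Since $\sigma$ is an automorphism, $\sigma\cdot G=G$ as a coloured graph. Neutrality then gives $\Phi(G)=\Phi(\sigma\cdot G)=\hat\sigma(\Phi(G))$. Single-valuedness is what makes this an equality of two definite subgraphs rather than of two sets of candidates. Hence $\Phi(G)$ is fixed by every $\hat\sigma$ with $\sigma\in H$. This is exactly the first half of \Cref{prop:neutral}. By minimality, $\Phi(G)\in\operatorname{Min}(\mathcal{S})$, so $\Phi(G)\in\operatorname{Min}(\mathcal{S})^{H}$, and that set is nonempty.

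\emph{Sufficiency.} Suppose $M\in\operatorname{Min}(\mathcal{S})^{H}$. I define $\Phi(G)=M$ and extend it along relabellings: for a permutation $\pi$ of $V$, set $\Phi(\pi\cdot G)=\hat\pi(M)$. The only thing to check is that this is well defined.
\begin{itemize}
\item If $\pi\cdot G=\pi'\cdot G$, then $\pi^{-1}\pi'\in H$. Because $M$ is $H$-fixed, $\hat\pi(M)=\hat\pi'(M)$, so the definition does not depend on the choice of $\pi$.
\item The rule is single-valued by construction.
\item It is neutral, since relabelling $\pi\cdot G$ by $\sigma$ gives $\Phi(\sigma\pi\cdot G)=\hat\sigma\hat\pi(M)$.
\item It is minimal at every relabelled copy, because validity is defined from $f$'s outputs, so $\mathcal{S}(\pi\cdot G)=\hat\pi(\mathcal{S}(G))$. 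That set equality is the relabelling form of \Cref{prop:equiv}. A bijection preserves inclusion, so $\hat\pi(M)$ is minimal in $\mathcal{S}(\pi\cdot G)$.
\end{itemize}
Off the orbit of $G$, $\Phi$ can be defined orbit by orbit in the same way, or left unconstrained, since the claim is local to $G$. For node-level tasks the same argument runs with $\Stab(v)$ in place of $H$.

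The step I expect to be the main obstacle is this well-definedness and transport in the sufficiency direction. It needs the fact that $\mathcal{S}$ transforms covariantly under arbitrary relabellings $\pi$, not only under automorphisms. I would get this from the fact that $f$ is a function of the isomorphism class together with the mask, which is the same induction as in the proof of \Cref{prop:equiv}. The final sentence of the theorem is then the contrapositive of necessity.
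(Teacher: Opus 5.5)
Your proposal is correct and follows the paper's proof. Necessity comes from \Cref{prop:neutral} together with minimality, and sufficiency comes from letting an element of $\operatorname{Min}(\mathcal{S})^{H}$ be the rule's output. Your transport $\Phi(\pi\cdot G)=\hat\pi(M)$, with its well-definedness check, spells out what the paper compresses into ``any $E^\ast\in\operatorname{Min}(\mathcal{S})^{H}$ is itself such a rule's output''; it is only needed if neutrality quantifies over all relabellings rather than over $H$ alone.
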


\begin{proof}
If $\Phi$ satisfies all three then $\Phi(G)$ is minimal and, by
\Cref{prop:neutral}, $H$-fixed, so $\operatorname{Min}(\mathcal{S})^{H}$ is
nonempty. Conversely, any $E^\ast\in\operatorname{Min}(\mathcal{S})^{H}$ is
itself such a rule's output. The two escapes are then as follows. Relinquishing
\emph{(ii)} and reporting the orbit closure
$\operatorname{cl}_H(E^\ast)=\bigcup_{h\in H}\hat h(E^\ast)$ gives an
$H$-invariant set, and it lies in $\mathcal{S}$ whenever $\mathcal{S}$ is closed
upwards, which is the case for sufficiency-style criteria but not for every
$\Valid$. It contains disjoint sufficient subsets and so is not minimal.
Relinquishing \emph{(i)} and reporting the orbit of $E^\ast$ satisfies
\emph{(ii)} and \emph{(iii)} together.
\end{proof}

Taking $H$ rather than a single $\sigma$ matters: for $|H|>2$ the set
$E^\ast\cup\hat\sigma(E^\ast)$ need not be fixed by the rest of the group.

\begin{corollary}
\label{cor:type}
When $\operatorname{Min}(\mathcal{S})^{H}=\varnothing$, preserving both
minimality and neutrality requires changing the \emph{output type} of an
explainer, from a subgraph to an orbit or to a distribution over minimal
explanations that is equivariant under $H$.
\end{corollary}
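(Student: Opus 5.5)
The corollary follows from \Cref{thm:trilemma} by contraposition on property \emph{(i)}, together with a closer look at what ``output type'' means. First I fix the type of a subgraph-valued explainer: a map $\Phi$ sending each instance $G$ to a single subset $\Phi(G)\subseteq E$. Single-valuedness \emph{(i)} is then not an extra constraint but part of that type. Suppose $\operatorname{Min}(\mathcal{S})^{H}=\varnothing$ and some subgraph-valued $\Phi$ were minimal and neutral at $G$. Then $\Phi$ would satisfy \emph{(i)}--\emph{(iii)} at once, and the forward direction of \Cref{thm:trilemma} would force $\Phi(G)\in\operatorname{Min}(\mathcal{S})^{H}$. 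That contradicts the hypothesis. So no rule keeping the subgraph output type can keep both minimality and neutrality, and the output type must change.

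Next I would show that the two named alternatives do work, so that the corollary's ``requires changing the output type'' is tight. Both constructions come from the proof of \Cref{thm:trilemma}.

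For the \textbf{orbit output}, I take any $E^\ast\in\operatorname{Min}(\mathcal{S})$, which is nonempty whenever $\mathcal{S}$ is, since $E$ is finite. I output
\[
\Phi(G)=H\cdot E^\ast=\{\hat h(E^\ast):h\in H\}.
\]
Every member is minimal, because $\hat h(\mathcal{S})=\mathcal{S}$ by \Cref{prop:equiv} and $\hat h$ preserves inclusion. For neutrality, I would check that $\Phi(\sigma\cdot G)=\hat\sigma(H\cdot E^\ast)$. This uses two facts: $\Aut(\sigma\cdot G)=\sigma H\sigma^{-1}$, and $\hat h\mapsto\hat\sigma\hat h$ permutes the orbit. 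The rule must also be consistent across relabellings, and the clean way to get this is to define $\Phi$ on the isomorphism class by choosing $E^\ast$ on a canonical form. On $G$ itself, $\hat\sigma$ acts on the orbit as a set, and any orbit is set-wise $H$-fixed.

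For the \textbf{distribution output}, I take the uniform measure on $H\cdot E^\ast$, or the pushforward of Haar (here counting) measure on $H$ under $h\mapsto\hat h(E^\ast)$. Left-invariance of counting measure gives $H$-equivariance, and the support consists of minimal explanations.

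The main obstacle is making ``requires'' precise rather than rhetorical. The claim is only that \emph{some} departure from a single subset is needed, so I would phrase the argument as a dichotomy. If the output is a single subgraph, \Cref{thm:trilemma} rules out minimality together with neutrality. If the output is allowed to be set-valued or measure-valued, the constructions above succeed. I would also note that the set-valued case needs neutrality redefined as $\Phi(\sigma\cdot G)=\hat\sigma_\ast\Phi(G)$ on the new type. Once that definition is stated, everything else is bookkeeping with \Cref{prop:equiv}.
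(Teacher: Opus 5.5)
Your proposal is correct and follows the paper's route. The paper reads the corollary off the forward direction of \Cref{thm:trilemma}: a subgraph-valued rule is single-valued, so if it were also minimal and neutral its output would lie in $\operatorname{Min}(\mathcal{S})^{H}$. It then uses the escape in that theorem's proof, where relinquishing \emph{(i)} and reporting the orbit of $E^\ast$ satisfies \emph{(ii)} and \emph{(iii)}. Your canonical-form bookkeeping and the uniform-measure version fill in details the paper leaves implicit. One small correction: ``$\hat h\mapsto\hat\sigma\hat h$ permutes the orbit'' holds only for $\sigma\in H$. For a general relabelling, $\hat\sigma$ carries the orbit to the $\Aut(\sigma\cdot G)$-orbit of $\hat\sigma(E^\ast)$, which is exactly what your canonical-form definition delivers.
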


\Cref{cor:type} has teeth because every current fidelity, sparsity and
ground-truth metric is defined on a single mask. Its hypothesis is not vacuous: the instances measured below are exactly those where
it holds.

\subsection{When a Report Severs an Orbit}
\label{sec:criterion}

In practice explanations are reported as top-$k$ edges. Suppose the mask is
$\sigma$-invariant, as it must be for an equivariant explainer. Then the edges of
one \emph{edge orbit} all carry the same value and a sort cannot tell them apart.
It can, however, interleave two distinct orbits that happen to share a value,
which is why the two statements below differ.

\begin{proposition}[Combinatorial criterion]
\label{prop:criterion}
Let $E_k$ be the top-$k$ edges under a $\sigma$-invariant mask. Then
$\hat\sigma(E_k)=E_k$ for all $\sigma$ if and only if the cut at $k$ severs no
edge orbit. Otherwise top-$k$ must choose among edges of exactly equal value,
and the alternative $\hat\sigma(E_k)$ is, by \Cref{prop:equiv}, equally valid.
We say the budget \emph{severs} an orbit, reserving \emph{forced} for the
stronger property in \Cref{cor:exactk}.
\end{proposition}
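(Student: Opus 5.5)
The plan is to make the statement precise by fixing what ``top-$k$'' and ``severs'' mean, and then to prove both directions through orbit structure. Let $H=\Aut(G)$. An invariant mask $m$ satisfies $m_{\hat\sigma(e)}=m_e$ for all $\sigma\in H$, so $m$ is constant on each edge orbit $O\subseteq E$. Write $E_k$ for the output of a top-$k$ rule: a set of $k$ edges such that every edge in $E_k$ has value at least that of every edge outside it. The cut at $k$ \emph{severs} an orbit $O$ when $\varnothing\neq O\cap E_k\neq O$. Because $H$-orbits partition $E$, invariance of $E_k$ reduces to a statement about each orbit separately.

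\emph{($\Leftarrow$)} Suppose no orbit is severed. Then every orbit lies entirely inside or entirely outside $E_k$, so $E_k$ is a union of orbits. Each $\hat\sigma$ maps every orbit onto itself, so $\hat\sigma(E_k)=E_k$.

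\emph{($\Rightarrow$)} Suppose $O$ is severed. Pick $e\in O\cap E_k$ and $e'\in O\setminus E_k$. By the definition of an orbit there is $\sigma\in H$ with $\hat\sigma(e)=e'$. Then $e'\in\hat\sigma(E_k)$ but $e'\notin E_k$, so $\hat\sigma(E_k)\neq E_k$. This gives the ``if and only if''.

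For the ``otherwise'' clause, note that $m_e=m_{e'}$ because $m$ is constant on $O$. Since $e\in E_k$ and $e'\notin E_k$, the threshold value of the cut is attained both inside and outside $E_k$. So the selection is a choice among exactly tied edges, and nothing in $m$ determines it; any tie-breaking rule, array order included, decides it. For equal validity, $\hat\sigma(E_k)$ is again a $k$-set. Its mask is $\sigma\cdot\mathbf 1_{E_k}$, so \Cref{prop:equiv} gives $f(G,\sigma\cdot\mathbf 1_{E_k})=f(G,\mathbf 1_{E_k})$ for graph-level readout. For node-level readout the same holds at the target node, provided $\sigma\in\Stab(v)$, and the orbits are then taken under $\Stab(v)$. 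Any $\Valid$ built from $f$'s outputs therefore agrees on the two sets. Invariance of $m$ also gives $\sum_{E_k}m=\sum_{\hat\sigma(E_k)}m$, so $\hat\sigma(E_k)$ is itself a legitimate top-$k$ output.

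The step I expect to need the most care is distinguishing this from the phenomenon the paragraph before the statement warns about. Ties \emph{between distinct orbits} that share a value can also make top-$k$ non-unique, yet every choice there may still be $H$-invariant. The proposition speaks only about orbits being split, so the proof must use a transitive witness inside a single orbit and not merely an equality of values. I will therefore state explicitly that the criterion concerns the particular $E_k$ returned, not every admissible top-$k$ set; the stronger ``every choice severs'' property is deferred to \Cref{cor:exactk}. The node-level case is handled by replacing $H$ with $\Stab(v)$ throughout.
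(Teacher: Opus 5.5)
Your proof is correct and is essentially the paper's: one direction writes $E_k$ as a union of complete orbits, and the other uses the orbit witness $\sigma$ with $\hat\sigma(e)=e'$. You run that second direction contrapositively (a severed orbit gives $e'\in\hat\sigma(E_k)\setminus E_k$), whereas the paper argues directly that every orbit meeting $E_k$ lies inside it. Your treatment of the ``otherwise'' clause also fills in steps the paper's proof leaves implicit: the exact tie $m_e=m_{e'}$ within the severed orbit, and the validity of $\hat\sigma(E_k)$ via \Cref{prop:equiv} applied to its indicator mask.
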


\begin{proof}
If the cut severs no orbit then $E_k$ is a union of complete orbits, and each
$\hat\sigma$ permutes every orbit onto itself, so $\hat\sigma(E_k)=E_k$.
Conversely suppose $\hat\sigma(E_k)=E_k$ for all $\sigma$, and let $O$ be an
orbit with some $e\in O\cap E_k$. For any $e'\in O$ there is, by definition of
an orbit, a $\sigma$ with $\hat\sigma(e)=e'$, and then
$e'\in\hat\sigma(E_k)=E_k$. Hence $O\subseteq E_k$, so every orbit meeting
$E_k$ lies inside it and none is severed.
\end{proof}

\Cref{prop:criterion} characterises a \emph{given} report. Whether \emph{every}
report of that size must sever an orbit is a separate question, and the one faced
when the budget is fixed.

\begin{corollary}[Exact-$k$ reporting]
\label{cor:exactk}
Let $t$ be the $k$-th largest value of a $\sigma$-invariant mask, and split the
edges into $E_{>}=\{e:m_e>t\}$ and $E_{=}=\{e:m_e=t\}$, both unions of complete
orbits. A neutral, score-optimal report of exactly $k$ edges exists if and only if
$k-|E_{>}|$ is a sum of sizes of some sub-collection of the orbits partitioning
$E_{=}$. When it is not, every score-optimal report of that size severs an orbit.
\end{corollary}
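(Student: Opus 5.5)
The plan is to fix what ``score-optimal report of exactly $k$ edges'' means and then reduce the question to a statement about unions of orbits. A set $R\subseteq E$ with $|R|=k$ is score-optimal exactly when its total mask value equals the maximum over all $k$-subsets. A standard exchange argument shows this happens precisely when $E_{>}\subseteq R\subseteq E_{>}\cup E_{=}$. If some $e\in E_{>}$ were missing from $R$, then $R$ would contain an edge of value at most $t$, possibly below $t$; but $|E_{>}|<k$ forces $R$ to contain an edge of value $\le t<m_e$, and swapping that edge for $e$ strictly increases the score. Similarly, if $R$ contained an edge of value $<t$, there would be an unused edge of value $\ge t$ to swap in, because $|E_{>}\cup E_{=}|\ge k$ by the definition of $t$ as the $k$-th largest value. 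So score-optimal reports are exactly $E_{>}\cup T$ with $T\subseteq E_{=}$ and $|T|=k-|E_{>}|$.

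Next I would check that $E_{>}$ and $E_{=}$ are unions of complete orbits. This is immediate from $\sigma$-invariance: $m_{\hat\sigma(e)}=m_e$, so each level set of $m$ is $\hat\sigma$-stable for every $\sigma\in H$. It follows that the orbits partitioning $E_{=}$ are well defined.

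Neutrality then comes from \Cref{prop:neutral} together with the argument of \Cref{prop:criterion}. A report $R$ is neutral-compatible iff $\hat\sigma(R)=R$ for all $\sigma\in H$, and by the proof of \Cref{prop:criterion} this holds iff $R$ is a union of complete orbits. Since $E_{>}$ is already such a union and is disjoint from $E_{=}$, the report $R=E_{>}\cup T$ is a union of orbits iff $T$ is a union of orbits inside $E_{=}$. Such a $T$ of size $k-|E_{>}|$ exists iff $k-|E_{>}|$ is a subset sum of the orbit sizes in $E_{=}$, which is the equivalence claimed.

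The final sentence is the contrapositive. If no such subset sum exists, every score-optimal $R$ fails to be a union of orbits, so by \Cref{prop:criterion} the cut severs some orbit. I expect the only delicate step to be the exchange argument characterising score-optimality. In particular, I need to state clearly that optimality is meant as maximising the summed mask (or, equivalently, as being a valid top-$k$ under some tie-break). Under either reading the first paragraph's characterisation goes through, and the remainder is bookkeeping.
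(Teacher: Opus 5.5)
Your proposal is correct and follows the paper's proof. Score-optimal reports are exactly $E_{>}\cup T$ with $T\subseteq E_{=}$ and $|T|=k-|E_{>}|$, and by the argument of \Cref{prop:criterion} such a report is neutral if and only if $T$ is a union of complete orbits, which is the subset-sum condition; the only difference is that you spell out the exchange argument for the first step, which the paper simply asserts.
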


\begin{proof}
Any score-optimal report contains $E_{>}$ and takes the remaining
$k-|E_{>}|$ edges from $E_{=}$. By \Cref{prop:criterion} it is neutral exactly
when that remainder is a union of complete orbits, which is possible precisely
for the stated subset sums.
\end{proof}

The distinction is not idle: two orbits may carry the same value, in which case
an index-ordered sort can interleave and sever both where a different
score-optimal choice would have been neutral. On MUTAG \CrossTieGraphs{}
molecules exhibit such cross-orbit ties, yet of the \ForcedSevered{}
instance--budget pairs whose reported top-$k$ severs an orbit,
\ForcedAvoidable{} admit a neutral alternative under \Cref{cor:exactk}. The
severing we observe is therefore not an artefact of the implementation, a
conclusion \Cref{prop:criterion} alone would not support.

The criterion requires no model evaluation: orbits come from the graph, the
ordering from the mask. We mechanise exactly this argument in Lean~4, abstracted
to a statement about a permutation acting on a finite set: $\hat\sigma(S)=S$ if
and only if $S$ is a union of complete orbits. The development is
self-contained, uses no library, and all three theorems discharge with
\emph{no axiom dependencies whatsoever}, not even propositional extensionality.
The criterion therefore holds unconditionally and by machine check, which is what
lets us treat any disagreement with the model below as evidence about the model
rather than about the criterion.

\paragraph{Node-level tasks.}
For a node-level explanation of $v$ the relevant group is $\Stab(v)$, not
$\Aut(G)$: a $\sigma$ moving $v$ relates \emph{two} predictions rather than
exhibiting two explanations of one, which is the equivariance studied by
\citet{crabbe2023evaluating} and not the degeneracy studied here. For an
$L$-layer network only automorphisms acting inside the $L$-hop receptive field
of $v$ can change the explanation. Both restrictions are applied to the
synthetic figures in \Cref{tab:incidence}, and without the second the numbers
would be far larger and largely vacuous: the tree underlying Tree-Grid has an
automorphism group of order exceeding $10^{50}$, almost all of which permutes
undecorated subtrees no explanation can reach.

\begin{remark}[A precision caveat]
\label{rem:subsets}
The orbit action on edges induces one on $j$-subsets, under which two
$j$-subsets of a single orbit need not be related: for $\{a,b,c,d\}$ under
$(a\,b)(c\,d)$, the subsets $\{a,c\}$ and $\{a,d\}$ are related while
$\{a,b\}$ and $\{a,c\}$ are not. \Cref{prop:criterion} therefore detects
correctly \emph{that} a tie is forced, while identifying \emph{which}
alternatives are equivalent needs the subset-level structure. The two coincide
in \StabQualIdentical{}\% of cases, and all validation below uses the direct
model-equivalence check.
\end{remark}

\section{Experimental Setup}
\label{sec:setup}

We explain three-layer GCNs \citep{kipf2017semi} on MUTAG
\citep{debnath1991mutag} and the datasets of \Cref{tab:incidence} with sum pooling, trained to
convergence on each dataset. Explanations of a model that has not learned the
task are meaningless, so we report training accuracy alongside every
measurement and discard any configuration below $0.7$. Automorphism groups are
computed exactly with \texttt{nauty} \citep{mckay2014nauty} through
\texttt{pynauty}, with vertices coloured by their complete feature vector.
Explainers are taken from PyTorch Geometric \citep{fey2019pyg}: GNNExplainer,
PGExplainer, and the Captum family (Saliency, Integrated Gradients,
Input$\times$Gradient, Guided Backpropagation, Deconvolution), seven in all. All
experiments run on CPU in minutes: no GPU is required at any point, and the
whole suite completes on one laptop core (Apple M1 Pro, 16\,GB, macOS 26.5,
PyTorch 2.13, PyTorch Geometric 2.8, Captum 0.9, \texttt{pynauty} 2.8.8; the code
release pins every version).

Hyperparameters were not tuned for performance, only until the model fit the
task: hidden width $64$, Adam at $0.01$, and the epoch count raised from $60$ to
$300$ with dataset size until training accuracy cleared the floor. The
bond-reading network is a three-layer GINE of the same width, trained
identically except on Mutagenicity, where $0.01$ does not converge and we use
$0.003$. Explainer hyperparameters are the library defaults throughout
($100$--$150$ optimisation epochs for GNNExplainer), since the phenomenon is a
property of the reporting step rather than of tuning. Every measurement over
seeds sets \texttt{torch} and \texttt{numpy} seeds before each explainer call and
reports the number of runs: eight per instance for the stability comparison, six
for the cross-seed decomposition, one for the deterministic sweeps.

Two implementation details matter. Each explainer gets a \emph{fresh copy} of the
model from a shared state dictionary, because GNNExplainer installs edge masks on
the message-passing modules and leaves them there, silently disabling the
gradient path the Captum explainers differentiate through. And colouring vertices
by \texttt{argmax} of the feature vector, adequate for the one-hot encodings of
MUTAG and PROTEINS, is wrong for BBBP whose features are mixed integer
descriptors, where it yields spurious automorphisms that do not preserve the
model's input; we colour by the entire vector. Directed edge masks are folded
onto undirected edges by summing the two orientations before any top-$k$.

\section{Results}
\label{sec:results}

\subsection{Incidence of Relevant Symmetry}
\label{sec:incidence}

\begin{table}[t]
\centering
\small
\begin{tabular}{llr}
\toprule
Dataset & Kind & Symmetric \\
\midrule
Mutagenicity & molecules & \MutagenicityPctSym\% \\
MUTAG & molecules & \MutagPctSymModel\% \\
PROTEINS & non-molecular & \ProteinsPctSym\% \\
BBBP & molecules & \BbbpPctSym\% \\
Tree-Grid & synthetic & \TreeGridPct\% \\
Tree-Cycles & synthetic & \TreeCyclesPct\% \\
\bottomrule
\end{tabular}
\caption{Instances admitting a nontrivial automorphism relevant to the explained
prediction, with $n{=}\MutagenicityN$, $\MutagN$, $\ProteinsN$, $\BbbpN$ for the
real datasets \citep{kazius2005mutagenicity,debnath1991mutag,borgwardt2005proteins,wu2018moleculenet},
the first three via TUDataset \citep{morris2020tudataset} and the synthetic
generators from \citet{ying2019gnnexplainer}. Synthetic figures are restricted to
the $3$-hop receptive field of the explained node.}
\label{tab:incidence}
\end{table}

\Cref{tab:incidence} reports incidence, and two points deserve emphasis. First,
Mutagenicity is the dataset the seminal explainability papers actually use:
\citet{ying2019gnnexplainer} describe theirs as ``a dataset of 4,337 molecule
graphs'', and the average of \MutagenicityAvgNodes{} nodes we measure matches
the table in \citet{luo2020pgexplainer}. It is not the 188-graph MUTAG of
TUDataset, the two are frequently conflated, and we report both. Second, the
figures bear on the premise by which symmetric inputs are set aside:
\citet{kaba2023symmetry} dismiss them as measure zero, which holds on the
continuous domains they consider and does not transfer to molecular graphs,
where they are the overwhelming majority. \citet{lawrence2025equivariance} note in
passing that small graphs are especially likely to carry nontrivial
automorphisms, and \Cref{tab:incidence} quantifies it.

\paragraph{Validation of the criterion.}
On Tree-Grid the predicted rate alternates sharply with $k$, falling to zero for
Integrated Gradients at every budget whose cut consumes whole orbits and rising
to the maximum in between, all from the graph alone. We check this against a
mechanical ground truth: form the top-$k$ set, form its image under each automorphism, feed both
to the frozen model as hard masks, and compare outputs. Across \RealJudgments{} decisions on
MUTAG, PROTEINS and BBBP with four explainers, \MutagenicityJudgments{} on
Mutagenicity with three, and \SynthJudgments{} on synthetic benchmarks, the
criterion agrees with the mechanical check on \TotalJudgments{} of
\TotalJudgments{} decisions, with no false positives or negatives. Under an
absolute rather than relative tolerance the figure is \RealPredAccAbs\%, the
discrepant cases differing by one float32 unit in the last place.

\paragraph{An accident in the benchmarks.}
The generator of \citet{ying2019gnnexplainer} attaches each motif through the
motif's \emph{first} node, which for the house motif is one of the two its only
nontrivial automorphism moves, so attaching there destroys the symmetry: we
measure zero symmetric instances in BA-Shapes and in the house half of
BA-2Motifs, and re-attaching through the roof apex restores $|\Aut|=2$. The
community's most-used synthetic benchmarks understate the phenomenon by
construction, and did so by accident.

\subsection{Two Gaps in Resolution}
\label{sec:gaps}

\begin{table}[t]
\centering
\small
\setlength{\tabcolsep}{4pt}
\begin{tabular}{lccc}
\toprule
Node pairs & GCN & GINE-const & GINE \\
\midrule
intrinsic ($n{=}\IntrPairsN$)     & \IntrGcnIndistPct\% & \IntrBlindSepPct\% & \IntrGineSepPct\% \\
manufactured ($n{=}\ManuPairsN$)  & \ManuGcnIndistPct\% & \ManuBlindSepPct\% & \ManuGineSepPct\% \\
\bottomrule
\end{tabular}
\caption{MUTAG. First column: percentage of pairs the GCN cannot separate. Last
two: percentage that network \emph{can}. GINE-const is the same architecture with
bond features replaced by a constant, so it differs from GINE in the information
alone, not the parameterisation. On this dataset GINE separates the manufactured
pairs and ties the intrinsic ones, recovering
$\sim_{\text{GINE}}=\sim_{\mathrm{dom}}$ in both directions. Mutagenicity, below,
behaves differently.}
\label{tab:gaps}
\end{table}

\paragraph{The first gap is pervasive.}
Every MUTAG molecule contains at least one manufactured pair, and the index
$[\sim_f:\sim_{\mathrm{dom}}]$ counting the symmetry the model's blindness
invents averages \ManuExcessMean{} and reaches \ManuExcessMax{}. On average
\ManuEdgePct{}\% of a molecule's edges lie in an orbit the domain splits, rising
to \ManuEdgeMax{}\%. Those edges are related by an automorphism of what the model reads, so
\Cref{prop:equiv} obliges any equivariant explainer to score them equally, and
naming one of them displays a distinction the explainer's own subject cannot
make.

Explaining a network that \emph{does} read bond types tests this, and separates a
necessary claim from a sufficient one. The necessary half is provable and holds
throughout: a bond-blind network cannot separate a manufactured pair, and does
not, on \ManuGcnIndistPct{}\% of MUTAG and \ManuGcnIndistPctMg{}\% of
Mutagenicity pairs. A matched control isolates the cause, since the same GINE
\citep{hu2020strategies} with bond features held constant separates
\ManuBlindSepPct{}\%, exactly like the GCN. What sets the resolution is what the
model reads, not how it is parameterised.

The sufficient half is where the interesting variation lies. On MUTAG a GINE
trained to \AccGine{} separates \ManuGineSepPct{}\% of manufactured pairs and
\IntrGineSepPct{}\% of intrinsic ones (\Cref{tab:gaps}), recovering
$\sim_{\text{GINE}}=\sim_{\mathrm{dom}}$ in both directions. On Mutagenicity the
same construction at \AccGineMg{} separates \ManuGineSepPctMg{}\%, and the
dataset accounts for the difference: it carries three bond types with 82.8\% of
the mass on one against MUTAG's four, and a correspondingly smaller gap, index
\ManuExcessMeanMg{} against \ManuExcessMean{}. Reading an annotation is therefore
necessary for an explanation to resolve at the domain's resolution but not
sufficient, because the task must also reward using it. This is the more useful
conclusion of the two. Over-resolution cannot be repaired by enriching a model's
input, and the resolution at which a given trained model actually operates is an
empirical quantity that the procedure above measures in one pass.

\paragraph{The second gap stops at the representation.}
Automorphism orbits are contained in, and generally strictly smaller than,
colour-refinement classes: on MUTAG the classes are strictly coarser for
\WlCoarserPct{}\% of molecules, adding \WlExtraPairs{} node pairs that no
automorphism relates to the \WlOrbitPairs{} that one does, all of which receive
GCN representations agreeing to within one unit in the last place. It is tempting
to read these as further forced ties. They are not. Saliency assigns equal scores
to \WlAttrOnlySalPct{}\% of them and Integrated Gradients to
\WlAttrOnlyIgPct{}\%, against \WlAttrOrbitSalPct{}\% and
\WlAttrOrbitIgPct{}\% on the automorphic pairs used as a positive control. A
coordinate-level attribution separates what the forward pass cannot, because the
gradient at an edge depends on every path through it and not only on the endpoint
states. Equal representations are therefore necessary for an attribution tie and
not sufficient: an expressiveness bound alone licenses no claim about
explanations, and the counts in \Cref{tab:incidence} should be read as
automorphism-induced ties only.

Matching the refinement to the architecture still matters for the representation
claim. A GCN normalises by $(\tilde d_i\tilde d_j)^{-1/2}$, so a layer consults
its neighbours' degrees and is not a function of the neighbour multiset alone,
and refinement initialised without degree is too coarse to bound it: under that
mismatched calibration only \WlNaiveIndistPct{}\% of the supposedly tied pairs
are tied, with discrepancies up to \WlNaiveMax{}. The signature of a correctly
matched bound is agreement at the last bit.

\subsection{Attribution versus Reporting}
\label{sec:reporting}

\begin{table}[t]
\centering
\small
\setlength{\tabcolsep}{4pt}
\begin{tabular}{llcc}
\toprule
Explainer & Kind & Attr.\ ratio & one group only \\
\midrule
Integrated Grad. & equivariant & $\IgWeightRatio\times$ & \IgKTenOneOnlyPct\% \\
Saliency & equivariant & $\SalWeightRatio\times$ & \SalKTenOneOnlyPct\% \\
GNNExplainer & optimised & $\GnnWeightRatioMean\times$ & \GnnKTenOneOnlyPct\% \\
\bottomrule
\end{tabular}
\caption{The \NitroPairMolecules{} MUTAG molecules whose two nitro groups are
exchanged by an automorphism, at $k=10$. Equivariant explainers split
attribution exactly evenly between the two groups and still surface only one of
them a quarter of the time. Every such case is verified arbitrary: the
automorphic alternative is accepted by the model within tolerance in
\IgKTenArbitraryPct\% of them.}
\label{tab:reporting}
\end{table}

The sharpest form of the problem is visible on the \NitroPairMolecules{} MUTAG
molecules carrying two nitro groups exchanged by an automorphism (\Cref{tab:reporting}). For the
exactly equivariant explainers the total attribution assigned to the two
groups has ratio \IgWeightRatio$\times$: the attribution stage is scrupulously
even-handed. Yet at $k=10$, \IgKTenOneOnlyPct{}\% of these molecules have
exactly one group present in the reported subgraph, and in
\IgKTenArbitraryPct{}\% of those cases the automorphic alternative is accepted
by the model within tolerance. The disparity is manufactured wholly by the
reporting step.

GNNExplainer optimises a mask rather than differentiating one, so it is not
exactly equivariant, and it assigns the two groups ratios up to
\GnnWeightRatioMax$\times$. Its most extreme case is molecule \CaseTopGraph{}.
There two chemically equivalent nitro groups receive \CaseTopWA{} and
\CaseTopWB{}, while the model's outputs on the two corresponding subgraphs are
identical to the last bit. Across all
\CaseN{} such molecules, \CaseEquivalent{} have the two groups verified
model-equivalent.

\paragraph{How often does a practitioner's budget land badly?}
By \Cref{cor:exactk} the admissible budgets are the prefix sums of the orbit
sizes when the orbits carry distinct values, and the subset sums of the tied
block otherwise. Between \UnsafeKMin\% and \UnsafeKMax\% of budgets in $[1,20]$
are of neither form, depending on the dataset. Budgets below the motif size are ordinary practice, not a corner
case: \citet{ying2019gnnexplainer} use $K_M{=}10$ on real-world data,
\citet{luo2020pgexplainer} use ten edges on molecules, \citet{yuan2021subgraphx}
fix a five-node budget for every dataset, and the sweeps in
\citet{amara2022graphframex} and \citet{kosan2024gnnxbench} reach down to five.

\paragraph{Reproducibility.}
Because the tie is settled by array order, permuting the rows of an edge list,
a semantic no-op, can change what is reported. For an equivariant explainer we
observe the reported top-$k$ to change on \ReproSymMin--\ReproSymMax\% of
symmetric graphs and on \ReproCtrlMax\% of asymmetric graphs from the same
datasets and model. PyTorch documents that \texttt{topk} does not guarantee
stable indices among tied elements and may differ between CPU and CUDA, so the
choice is delegated to behaviour the library declines to define, and two widely
used implementations resolve it incompatibly: the original GNNExplainer code
thresholds the mask and keeps both alternatives, while PyTorch Geometric takes a
top-$k$ and keeps one.

\subsection{Stability Metrics under a Severed Orbit}
\label{sec:metric}

Resolve a severed orbit three ways under one $\sigma$-invariant attribution.
Using the cross-run Jaccard of \citet{kosan2024gnnxbench}, breaking the tie by
array index as current practice does scores \StabDet{}, sampling the orbit
\StabSample{}, and reporting the orbit whole \StabUnion{}, while the first two
outputs are automorphic images with identical fidelity in
\StabQualIdentical\% of cases. The metric cannot separate an explainer that is
canonical from one that is deterministic about being arbitrary, since both return
the same set every run and so score $1$ by construction. A stability of $1$
certifies a sorting routine rather than the data, and reporting the orbit
structure beside the score costs one integer per instance.

\begin{remark}[Scope]
\label{rem:notgnn}
This is a statement about what the metric can express, and it applies whether or
not any current explainer is affected. Under an exact test for automorphic
equivalence, \GnnOrbitSwitchPct\% of the GNNExplainer seed pairs whose reported
subgraphs differ do so by an orbit switch, so the instability documented by
\citet{kosan2024gnnxbench} has other sources. The gap in the metric is there
regardless, and it opens exactly when the reporting step meets a severed orbit.
\end{remark}

\subsection{Robustness across Architectures}
\label{sec:arch}

The criterion reads only $\Aut(G)$ and the mask, which \Cref{prop:equiv} makes
orbit-constant for any equivariant explainer and any network. Repeating the
core measurements on MUTAG with a graph isomorphism network and a graph attention
network (training accuracies \AccGCN{}, \AccGIN{}, \AccGAT{}), across
\ArchJudgments{} further instance--budget decisions the
criterion again agrees with the mechanical check on every one
(\ArchPredAcc\%), and the equivariant explainers again surface only one of two
interchangeable groups between \ArchEquivSilentMin\% and
\ArchEquivSilentMax\% of the time. Combined with the earlier sweeps this brings
the validation to \GrandTotalJudgments{} decisions without an exception.

The separation between explainer families is the same on all three
architectures. For the gradient-based explainers the asymmetry of the
attribution under $\sigma$ sits at $\EquivAsymOrder$, which is floating-point
noise, since \Cref{prop:equiv} is exact in exact arithmetic and the residue
depends only on the order in which a sum is accumulated. For GNNExplainer it is
$\OptAsymMin$--$\OptAsymMax$, eight orders of magnitude larger, because a
sparsity-regularised optimiser has no reason to settle on the symmetric point
even when the objective is symmetric about it. GAT is no exception, as
\Cref{prop:hierarchy} anticipates: attention coefficients are normalised over a
neighbourhood, so a layer remains a function of the neighbour multiset.

\section{Orbit-Aware Reporting}
\label{sec:fix}

\begin{algorithm}[t]
\caption{Orbit-aware reporting}
\label{alg:orbit}
\begin{algorithmic}[1]
\STATE {\bfseries Input:} graph $G$, attribution $m$, budget $k$,
       target $v$ (node-level tasks only)
\STATE $H \leftarrow \Aut(G)$ if graph-level, else $\Stab(v)$
       \hfill{\small// \texttt{nauty}, vertices coloured by feature vector}
\STATE $\mathcal{O} \leftarrow$ orbits of $H$ acting on $E$
       \hfill{\small// union--find over generators}
\STATE sort $\mathcal{O}$ by decreasing mean of $m$ over each orbit
\STATE $S \leftarrow \emptyset$
\FOR{$O \in \mathcal{O}$ in sorted order while $|S| < k$}
  \STATE $S \leftarrow S \cup O$ \hfill{\small// a whole orbit, never a part of one}
\ENDFOR
\STATE {\bfseries return} $S$, together with $|\mathcal{O}|$ and $\max_{O}|O|$
       \hfill{\small// so the reader is told a class was reported}
\end{algorithmic}
\end{algorithm}

\Cref{alg:orbit} selects whole orbits in decreasing order of mean attribution
until the budget is met. Its output is $\Aut(G)$-invariant by construction,
satisfying \emph{(i)} and \emph{(iii)} of \Cref{thm:trilemma} and relaxing
strict minimality. On MUTAG it eliminates the arbitrariness entirely, from
\FixSilentBefore\% of budget--instance pairs to none, at a median
\FixAutMs{}\,ms per molecule (\FixAutMsMax{}\,ms worst case) and
\FixExtraEdges{} additional edges displayed. Fidelity here is the drop in the
predicted-class logit when the reported subgraph is deleted,
$f(G)_c-f(G\setminus S)_c$, averaged over instances and larger when the report
carries more of the evidence. It moves from \FixFidBefore{} to
\FixFidAfter{}.

\paragraph{What a practitioner should do.}
Three findings translate directly into practice. First, report $|\Aut(G)|$: it
costs \FixAutMs{}\,ms and tells a reader that symmetry-induced non-canonicity is
possible, the check on whether it is realised being whether the reported set is a
union of complete orbits. Second, prefer a budget landing on an orbit boundary;
between \UnsafeKMin\% and \UnsafeKMax\% of budgets in $[1,20]$ do not, and the
check is combinatorial. Third, if a single subgraph must be shown, show the orbit
containing it and say so. None of these needs a new explainer or retraining.
Benchmark designers should add a fourth: choose a motif's attachment vertex
deliberately, since attaching through one its automorphism group moves removes
the symmetry silently.

\section{Limitations}
\label{sec:limits}

Our models have a few dozen nodes and at most three layers; transformer-style
models, whose relation is coarser still, are untested. We claim no
novelty for the group theory, credited in full above. The validation of
\Cref{prop:criterion} is a consistency check, so given \Cref{prop:equiv} its role
is to establish that nothing outside the criterion is at work rather than to
predict something new. The test of \Cref{prop:hierarchy} against a bond-reading
network carries that burden instead. The nitro-pair case rests on
\NitroPairMolecules{} molecules, so its rates carry wide intervals
(\IgKTenOneOnlyCiLo{}--\IgKTenOneOnlyCiHi\% at $k{=}10$), and at small budgets
most of them have neither group reported at all. Finally, we do not claim any
published figure or qualitative claim is incorrect: our model, training and
dataset version differ from the works we cite, and we measure only the rate at
which the mechanism occurs.

\section{Conclusion}

For graphs, equivariance is not only a property an explanation should have but a
constraint: naming one of two substructures the model cannot separate is a
decision, not a summary.

\bibliography{refs}

\appendix
\setcounter{secnumdepth}{2}
\renewcommand{\thesection}{\Alph{section}}
\onecolumn

\noindent
The appendix records material that does not fit the main text: the machine-checked
development behind the combinatorial criterion, the per-explainer and per-budget
breakdowns behind figures quoted in aggregate, and the experimental detail needed to
reproduce them. The body of the paper is self-contained and nothing here is
load-bearing for any claim it makes. Every number below is generated from the same
archived run outputs as the paper's, by \texttt{scripts/make\_supplement.py}.

\section{The Lean development}

The combinatorial core of the criterion is that a permutation fixes a set setwise
exactly when that set is a union of complete orbits. We mechanise it in Lean~4 over a
finite index type, with no dependency on \texttt{Mathlib} or on any other library. The
development is \texttt{lean/OrbitSplit.lean} in the code archive, is 96 lines long, and
elaborates in under three seconds.

The claim in the paper that the three theorems carry \emph{no axiom dependencies
whatsoever} is checked by the file itself, which ends with three \verb|#print axioms|
commands. Their output, verbatim:

\begin{Verbatim}[fontsize=\small,frame=single,framesep=4pt]
$ lean OrbitSplit.lean
'OrbitSplit.img_eq_iff_closed'
    does not depend on any axioms
'OrbitSplit.not_closed_imp_img_ne'
    does not depend on any axioms
'OrbitSplit.neutrality_excludes_split'
    does not depend on any axioms
\end{Verbatim}

\noindent
Not even propositional extensionality is used. This is a statement about the
combinatorial kernel only. It says the kernel is unconditional and machine-checked; it
does not certify the empirical claims, the choice of automorphism group, or the
implementation of any probe, all of which rest on the experiments instead.
\texttt{img\_eq\_iff\_closed} is the biconditional used as Proposition~4 in the paper;
\texttt{not\_closed\_imp\_img\_ne} is the contrapositive form used when the criterion
predicts a severed orbit; \texttt{neutrality\_excludes\_split} is the statement that a
neutral report cannot cut an orbit.

Reproducing the check requires only a Lean~4 toolchain:

\begin{Verbatim}[fontsize=\small,frame=single,framesep=4pt]
$ lean lean/OrbitSplit.lean
\end{Verbatim}

\section{Extended results}

\Cref{tab:s-breadth} gives every explainer on every dataset of the breadth sweep, rather
than the range quoted in the paper. Two features are worth noting. The gradient-based
family and PGExplainer sit at or below $10^{-8}$ asymmetry, which is accumulation-order
noise, while GNNExplainer is eight orders of magnitude above it. And the rate at which a
report severs an orbit is not monotone in $k$: it rises and falls as the cut crosses
orbit boundaries, which is the signature the criterion predicts from the graph alone.

\Cref{tab:s-nitro} gives the nitro-pair audit at every budget rather than at $k{=}10$
alone. The \emph{arb.} column equals the \emph{one} column in every cell: whenever
exactly one of two interchangeable groups is surfaced, the automorphic alternative is
accepted by the model within tolerance, with no exceptions at any budget for any of the
three explainers. The counts are small, $n{=}\NitroPairMolecules$, and the paper reports
a Wilson interval alongside the headline rate for that reason.

\Cref{tab:s-criterion} gives the criterion against the mechanical check as a confusion
matrix on the synthetic benchmarks. The published BA-2Motifs house generator contributes
no symmetric instances at all, which is the benchmark accident the paper describes; the
repaired attachment point restores them and the criterion is exercised there.

\Cref{tab:s-safek} gives, per dataset, the fraction of budgets in $[1,20]$ that admit a
neutral report. \Cref{tab:s-referee} gives the full distribution behind the paper's
finding that equality of representations does not transfer to equality of attributions.

\begin{table*}[t]\centering\small
\caption{All seven explainers on the three real datasets of the breadth
sweep. \emph{asym} is the mean asymmetry of the attribution under the
automorphism, \emph{agree} the agreement of the criterion with the
mechanical model-equivalence check, and the $k$ columns give the rate at
which the reported top-$k$ severs an orbit. The gradient-based explainers
sit at floating-point noise; GNNExplainer does not.}
\label{tab:s-breadth}
\begin{tabular}{llrrrrrr}
\toprule
Dataset & Explainer & asym & agree & $k{=}2$ & $3$ & $5$ & $8$ \\
\midrule
MUTAG & GNNExplainer & $1.6{\times}10^{-1}$ & 100.0\% & 78.7 & 87.2 & 74.5 & 74.5 \\
 & PGExplainer & $0$ & 100.0\% & 57.4 & 70.2 & 83.0 & 59.6 \\
 & Saliency & $7.0{\times}10^{-9}$ & 100.0\% & 29.8 & 55.3 & 51.1 & 36.2 \\
 & IntegratedGradients & $7.0{\times}10^{-9}$ & 100.0\% & 42.6 & 70.2 & 57.4 & 38.3 \\
 & InputXGradient & $8.0{\times}10^{-9}$ & 100.0\% & 21.3 & 61.7 & 59.6 & 36.2 \\
 & GuidedBackprop & $8.0{\times}10^{-9}$ & 100.0\% & 21.3 & 61.7 & 59.6 & 36.2 \\
 & Deconvolution & $8.0{\times}10^{-9}$ & 100.0\% & 21.3 & 61.7 & 59.6 & 36.2 \\
\midrule
PROTEINS & GNNExplainer & $1.8{\times}10^{-1}$ & 99.4\% & 48.0 & 54.8 & 70.6 & 79.2 \\
 & PGExplainer & $8.2{\times}10^{-28}$ & 99.8\% & 19.9 & 18.6 & 18.6 & 16.3 \\
 & Saliency & $1.6{\times}10^{-9}$ & 99.5\% & 36.2 & 24.9 & 24.0 & 17.2 \\
 & IntegratedGradients & $1.4{\times}10^{-9}$ & 99.7\% & 31.7 & 28.1 & 23.1 & 20.8 \\
 & InputXGradient & $1.6{\times}10^{-9}$ & 99.5\% & 36.2 & 24.9 & 24.0 & 17.2 \\
 & GuidedBackprop & $1.6{\times}10^{-9}$ & 99.5\% & 36.2 & 24.9 & 24.0 & 17.2 \\
 & Deconvolution & $1.6{\times}10^{-9}$ & 99.5\% & 36.2 & 24.9 & 24.0 & 17.2 \\
\bottomrule
\end{tabular}\end{table*}
\begin{table*}[t]\centering\small
\caption{The nitro-pair audit at every budget, on the 25 MUTAG molecules whose two nitro groups are exchanged by an automorphism.
\emph{one} is the count with exactly one group surfaced, \emph{both} with
both, \emph{neither} with neither. \emph{arb.} counts the \emph{one} cases
in which the automorphic alternative is accepted by the model within
tolerance; it equals \emph{one} at every budget and every explainer.}
\label{tab:s-nitro}
\begin{tabular}{lrrrrrrrrrrrrr}
\toprule
 & & \multicolumn{2}{c}{$k{=}3$} & \multicolumn{2}{c}{$k{=}4$} & \multicolumn{2}{c}{$k{=}5$} & \multicolumn{2}{c}{$k{=}6$} & \multicolumn{2}{c}{$k{=}8$} & \multicolumn{2}{c}{$k{=}10$} \\
Explainer & ratio & one & arb. & one & arb. & one & arb. & one & arb. & one & arb. & one & arb. \\
\midrule
GNNExplainer & 1.67$\times$ & 3 & 3 & 3 & 3 & 4 & 4 & 4 & 4 & 4 & 4 & 6 & 6 \\
IntegratedGradients & 1.00$\times$ & 0 & 0 & 0 & 0 & 1 & 1 & 1 & 1 & 6 & 6 & 6 & 6 \\
Saliency & 1.00$\times$ & 0 & 0 & 0 & 0 & 2 & 2 & 2 & 2 & 3 & 3 & 3 & 3 \\
\bottomrule
\end{tabular}\end{table*}
\begin{table*}[t]\centering\small
\caption{Criterion versus mechanical check on the synthetic benchmarks,
as a confusion matrix. TP is a severed orbit that the model confirms
admits an equally valid alternative. The published BA-2Motifs house
generator yields no symmetric instances at all, which is the benchmark
accident discussed in the paper; the repaired attachment restores them.}
\label{tab:s-criterion}
\begin{tabular}{llrrrrr}
\toprule
Explainer & Benchmark & $n$ & TP & FP & FN & TN \\
\midrule
IntegratedGradients & BA-2Motifs cycle5 (published) & 280 & 80 & 0 & 0 & 200 \\
IntegratedGradients & BA-2Motifs house (repaired) & 280 & 80 & 0 & 0 & 200 \\
IntegratedGradients & Tree-Cycles & 112 & 19 & 0 & 0 & 93 \\
IntegratedGradients & Tree-Grid & 140 & 34 & 0 & 0 & 106 \\
Saliency & BA-2Motifs cycle5 (published) & 280 & 80 & 0 & 0 & 200 \\
Saliency & BA-2Motifs house (repaired) & 280 & 80 & 0 & 0 & 200 \\
Saliency & Tree-Cycles & 112 & 20 & 0 & 0 & 92 \\
Saliency & Tree-Grid & 140 & 40 & 0 & 0 & 100 \\
\midrule
\multicolumn{2}{l}{\emph{overall}} & 1624 & \multicolumn{4}{r}{agreement 100.00\%} \\
\bottomrule
\end{tabular}\end{table*}
\begin{table}[H]\centering\small
\caption{Budgets in $[1,20]$ admitting a neutral report, per dataset.
\emph{predicted unsafe} is computed from the orbit sizes alone;
\emph{observed split} is measured from the reported top-$k$. The gap
between them is the quantity the criterion is required to explain.}
\label{tab:s-safek}
\begin{tabular}{lrrrr}
\toprule
Dataset & $n_{\mathrm{sym}}$ & safe & predicted unsafe & observed split \\
\midrule
MUTAG & 47 & 60.2\% & 39.8\% & 43.2\% \\
BBBP & 173 & 77.7\% & 22.3\% & 22.3\% \\
PROTEINS & 221 & 80.5\% & 19.5\% & 19.6\% \\
\bottomrule
\end{tabular}\end{table}
\begin{table*}[t]\centering\small
\caption{Does equality of representations transfer to equality of
attributions? Rows one and two are the positive control, where a genuine
automorphism relates the pair and Proposition 1 forces equality. Rows
three and four are pairs sharing a colour-refinement class without being
automorphic. Equality largely fails there, which is why the paper stops
the hierarchy at the representation.}
\label{tab:s-referee}
\begin{tabular}{llrrrr}
\toprule
Pairs & Attribution & $n$ & equal & median & max \\
\midrule
same orbit & Saliency & 731 & 100.0\% & $0$ & $6.0{\times}10^{-8}$ \\
same orbit & Integrated Grad. & 731 & 100.0\% & $0$ & $1.4{\times}10^{-7}$ \\
same WL class only & Saliency & 325 & 57.9\% & $1.9{\times}10^{-9}$ & $5.3{\times}10^{-1}$ \\
same WL class only & Integrated Grad. & 325 & 20.0\% & $3.6{\times}10^{-3}$ & $1.5{\times}10^{-1}$ \\
\bottomrule
\end{tabular}\end{table*}

\noindent Of the 165 instance--budget pairs on MUTAG whose reported top-$k$ severs an orbit, 0 admit a neutral score-optimal alternative of the same size. 25 molecules do carry two distinct orbits at equal score, so the possibility the corollary describes is realised in the data; it simply never rescues a severed budget here.

\section{The oscillation signature}

\begin{figure}[H]
\centering
\includegraphics{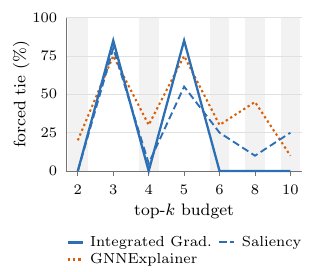}
\caption{The criterion's signature on Tree-Grid, computed from the graph alone before any
model is consulted. Shaded budgets are those whose cut consumes whole edge orbits. For
Integrated Gradients the rate of severing falls to zero at every one of them and rises to
the maximum in between. Saliency alternates in the same phase but less sharply, because
it ranks different orbits into the report. GNNExplainer, whose optimised mask is not
$\sigma$-invariant, never returns to zero at any budget tested. Rates are measured, not
smoothed.}
\label{fig:s-oscillation}
\end{figure}

\Cref{fig:s-oscillation} is the clearest visual form of Proposition~4 and was cut from
the main paper for space. Nothing in the paper depends on it; the same behaviour is
reported numerically in the validation paragraph.

\section{Experimental detail}

\paragraph{Models.}
Three-layer GCNs with hidden width $64$ and sum pooling, trained with Adam at learning
rate $0.01$. The epoch count was raised from $60$ to $300$ with dataset size until
training accuracy cleared the $0.7$ floor, and no further tuning was done: the phenomenon
under study is a property of the reporting step, not of model quality, and a model that
has not learned the task has no explanation worth auditing. The bond-reading network is a
three-layer GINE of the same width trained identically, except on Mutagenicity where
$0.01$ does not converge and $0.003$ is used. GINE-const is the same architecture with
the bond feature replaced by a constant, so that it differs from GINE in the information
available and not in the parameterisation.

\paragraph{Seeds.}
Every measurement over seeds sets \texttt{torch} and \texttt{numpy} seeds explicitly
before each explainer call. The stability comparison uses eight runs per instance, the
cross-seed decomposition six, and the incidence and criterion sweeps are deterministic
given a seed and use one. The main limitation the paper states in this respect is that
model training itself is single-seed, so the GINE separation rates are reported as
measured rather than as means over restarts.

\paragraph{Automorphism groups.}
Computed exactly with \texttt{nauty} through \texttt{pynauty}, with vertices coloured by
their complete feature vector. Colouring by \texttt{argmax} is adequate only for strictly
one-hot encodings and produces spurious automorphisms on mixed integer descriptors such
as BBBP's. Where the bond-preserving reading is wanted, the vertex-coloured group is
computed first and its elements are then filtered to those preserving edge labels, which
yields the correct subgroup rather than an approximation of it.

\paragraph{Tolerance.}
Model equivalence uses the relative test $|o_1-o_2|\le 10^{-4}\max(|o_1|,|o_2|,1)$
throughout. An absolute threshold of $10^{-5}$ misclassifies \ExcN{} of \ExcTotal{}
decisions in the largest sweep, all of which differ by exactly one unit in the last place
of float32 at an output magnitude near $128$.

\paragraph{Cost.}
The full suite runs on CPU on one laptop core; no GPU is used at any point. The
automorphism computation that the remedy requires takes a median of \FixAutMs{}\,ms per
molecule and \FixAutMsMax{}\,ms in the worst case observed.

\end{document}